\documentclass{article}

\usepackage[preprint]{neurips_2026}

\usepackage{marvosym}
\usepackage[utf8]{inputenc} 
\usepackage[T1]{fontenc}    
\usepackage{hyperref}       
\usepackage{url}            
\usepackage{booktabs}      
\usepackage{amsfonts}    
\usepackage{nicefrac}    
\usepackage{microtype}     
\usepackage{xcolor}        

\usepackage{amsmath}
\usepackage{amsthm}
\theoremstyle{plain}
\newtheorem{lemma}{Lemma}
\newtheorem{proposition}{Proposition}
\usepackage{cleveref}
\usepackage{multirow}
\usepackage{tcolorbox}

\setcitestyle{numbers,square,compress}
\tcbuselibrary{skins,breakable}

\title{Balancing Performance and Diversity in GRPO Autoregressive Text-to-Image Post-Training}

\author{%
Yuanhao Chiang$^{1}$,
Hongbo Duan$^{1}$,
Chunru Yang$^{1}$,
Jiahua Pei$^{1}$,
Yi Liu$^{1}$,
Xueqian Wang$^{1}\textsuperscript{\Letter}$ \\
$^{1}$ Shenzhen International Graduate School, Tsinghua University \\
\texttt{jiang-yh24@mails.tsinghua.edu.cn; wang.xq@sz.tsinghua.edu.cn}
}

\begin{document}

\maketitle

\begin{abstract}
  Autoregressive text-to-image (T2I) generation has recently advanced rapidly, yet aligning generated images with human preferences remains challenging. GRPO-style online reinforcement learning provides an effective framework; however, existing methods typically treat reference-policy divergence as fixed, despite its direct impact on policy optimization. We study this overlooked factor within a unified f-divergence framework, encompassing forward KL, reverse KL, and JS divergence, for GRPO-style autoregressive T2I alignment. Our systematic theoretical analysis reveals that different divergences reshape token-level updates in distinct ways. In particular, under the sampled-token shaping form used, JS regularization achieves a favorable trade-off by mitigating uniform bias relative to the reference policy while still discouraging large deviations. Extensive experiments on LlamaGen and Janus-7B show that JS divergence achieves the strongest or highly competitive optimization performance on most evaluation metrics while maintaining favorable generation diversity. The code is available at \url{https://github.com/tuoyou-hao/BPD-GRPO}.

\end{abstract}

\section{Introduction}

Autoregressive text-to-image (T2I) generation has emerged as a promising alternative to diffusion-based methods, representing images as sequences of discrete visual tokens conditioned on text prompts. Leveraging large-scale multimodal pretraining \cite{song2026cologen} and powerful autoregressive backbones, these models generate visually rich images while inheriting the next-token prediction paradigm proven effective in language modeling. However, strong generative capability alone does not guarantee preference-aligned generation. In practical T2I systems, generated images should not only align with the input prompt but also reflect human preferences. Prior work has investigated various post-training approaches for diffusion models \cite{fan2023dpok,wallace2024diffusion,yang2024using,liang2024step,li2024aligning}, including positive-signal, consistency-aware, and entropy-aware preference alignment for T2I generation ~\cite{sun2025positive,sun2025identical,bai2025entropy}.
For visual generation \cite{zeng2025mred,yin2026ai}, online GRPO-style optimization \cite{shao2024deepseekmath,zhang2025group,fang2026viss,sun2025reinforcement} is particularly appealing, as it provides a natural framework for group-wise sampling, relative reward comparison, and token-level credit assignment over discrete image tokens. Nevertheless, although recent studies have extended preference optimization and GRPO-style reinforcement post-training to broader alignment and visual generation settings~\cite{sun2024generalizing,lv2025hidden,luo2026reinforcement,luo2025sample,sun2026power}, existing autoregressive T2I formulations still largely regard the reference-policy divergence as a fixed design choice, specifically adopting the KL penalty without exploring alternative divergence formulations.

\section{Related Work}
\subsection{Policy Optimization for Text-to-Image Generation}
Policy optimization aims to adapt generative models toward human preferences beyond standard likelihood or reconstruction objectives. Early methods \cite{fan2023dpok} commonly rely on reward models and reinforcement learning \cite{xia2025delay,suncalibration,yin2025floorplan} to improve text-image matching, aesthetic quality, and human preference scores. More recent diffusion-based methods formulate alignment as direct preference optimization or preference-aware denoising, avoiding explicit online RL while still improving reward-model evaluation and consistency with human preferences \cite{wallace2024diffusion,yang2024using,liang2024step,li2024aligning}. Subsequent studies further improve T2I preference alignment from positive-signal utilization, preference consistency, and entropy-aware diversity perspectives~\cite{sun2025positive,sun2025identical,bai2025entropy}.
These methods primarily focus on diffusion backbones and differ in preference data construction, denoising-step weighting, or offline objective design.
In contrast, our study investigates reinforcement learning for autoregressive T2I models, where image generation is performed by discrete visual-token prediction.

\subsection{Online RLHF and GRPO-Style Optimization}
Online RLHF typically optimizes the policy against a reward signal while constraining its deviation from the reference policy. PPO \cite{schulman2017proximal} and related trust-region methods are widely utilized for language-model alignment as they balance reward maximization and policy stability \cite{ouyang2022training}. GRPO-style objectives further simplify online alignment by using group-wise relative comparison instead of a separate value model, making them suitable for scalable policy optimization \cite{shao2024deepseekmath}. For autoregressive image generation, GCPO\cite{zhang2025group} extends this paradigm to discrete visual tokens by assigning group-wise critical-token rewards and applying token-level updates. Our study builds on this line of work, but focuses on an overlooked component: the divergence regularization used to constrain the aligned policy. Rather than treating it as a default KL penalty, we examine how different divergence forms affect the optimization dynamics of autoregressive T2I alignment.

\subsection{\texorpdfstring{$f$}{f}-Divergence Regularization}
The $f$-divergence family \cite{csiszar1967information,kullback1951information,lin1991divergence} provides a unified view of many distributional discrepancy measures, including forward KL, reverse KL, JS divergence, and so on. Divergence choice is known to affect mode-seeking and mode-covering behavior in generative modeling and preference optimization~\cite{goodfellow2014generative,go2023aligning,wang2024beyond,sun2024generalizing}. Recent work has also introduced $f$-divergence perspectives into T2I post-training, showing that alternative divergences can improve the performance-diversity trade-off in offline diffusion alignment \cite{sun2025generalizing}. Our work differs by investigating divergence regularization as an online trust-region component for autoregressive T2I alignment. We connect the divergence form to token-level gradient shaping under GRPO-style optimization and empirically compare the performance of forward KL, reverse KL, and JS divergence under the same training and evaluation settings.

\section{Method}
\begin{figure}[t]
\centering
\includegraphics[width=0.9\textwidth]{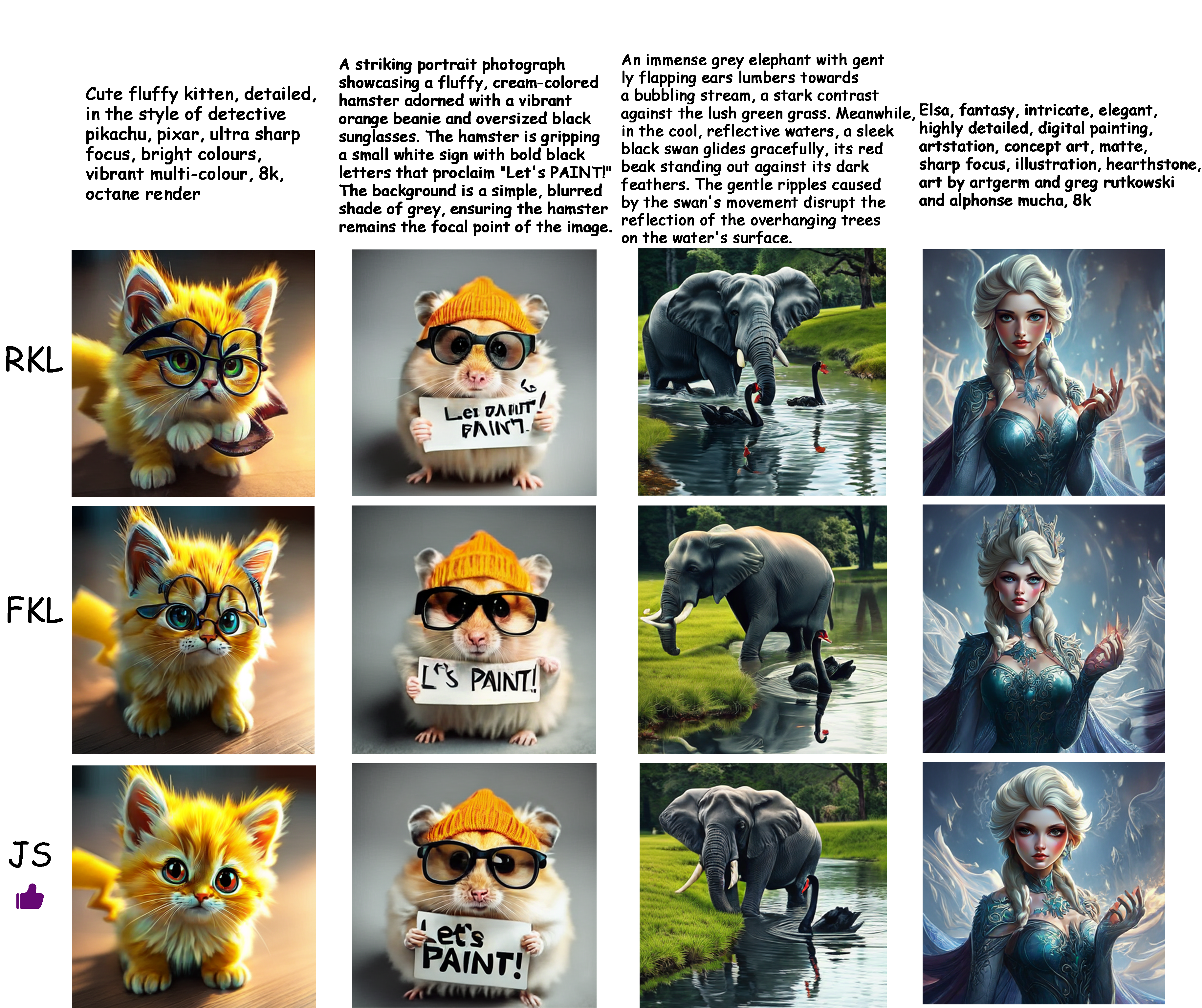}
\caption{Qualitative comparison of Janus-7B variants trained with RKL divergence, FKL divergence and JS divergence. Columns share the same prompt and rows correspond to divergence variants.}
\label{fig:qualitative_comparison_janus}
\end{figure}
\begin{figure}[t]
\centering
\includegraphics[width=0.9\textwidth]{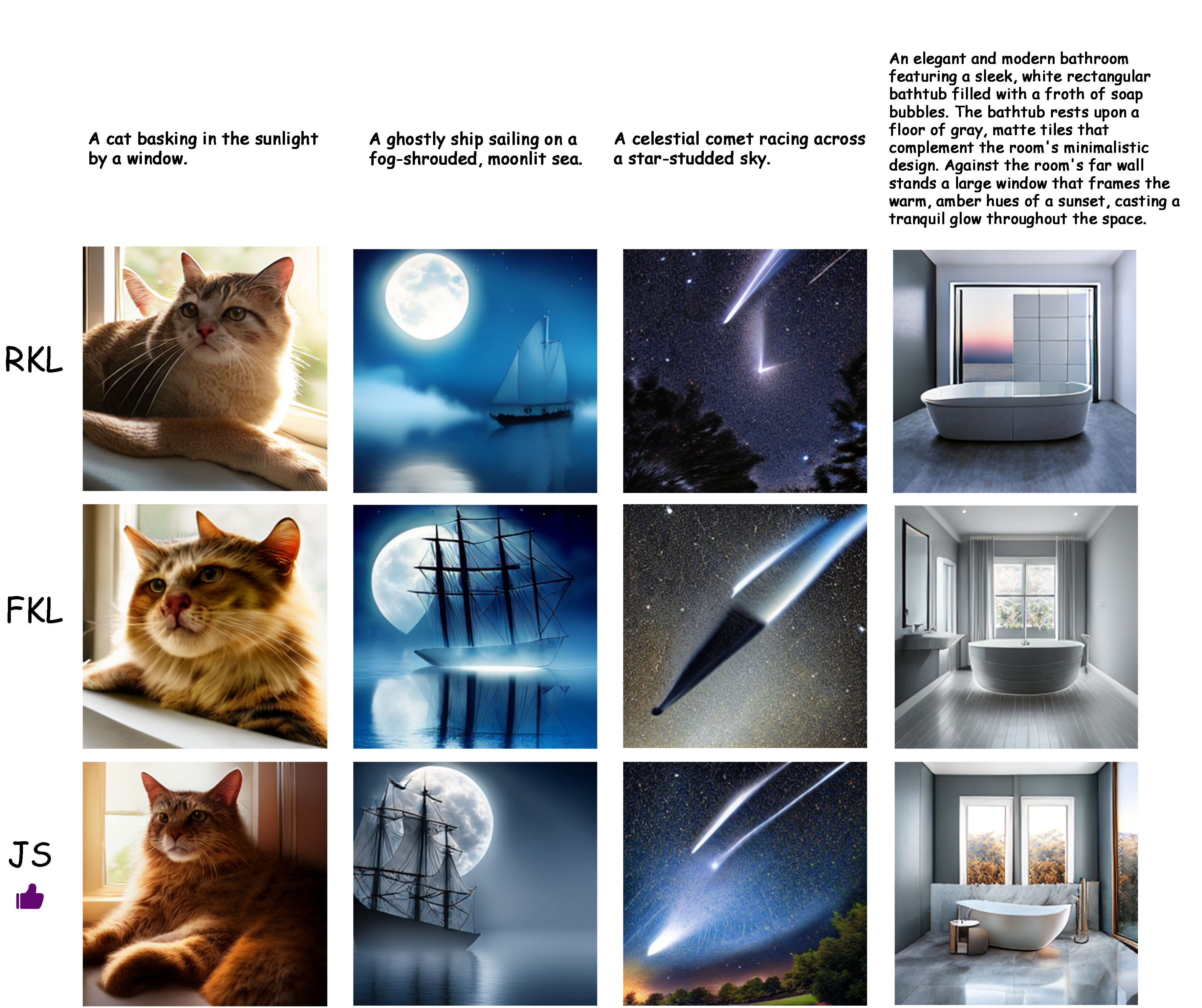}
\caption{Qualitative comparison of LlamaGen variants trained with RKL divergence, FKL divergence, and JS divergence. Columns share the same prompt and rows correspond to divergence variants.}
\label{fig:qualitative_comparison_llamagen}
\end{figure}
We study policy optimization for autoregressive T2I models in a GRPO-style online optimization framework. Given a prompt, the policy samples a group of candidate image-token sequences, computes relative advantages from reward scores, and updates token probabilities with a reference-policy regularization. 

\subsection{Preliminaries}
We first introduce the standard $f$-divergence formulation. Let $p$ and $q$ denote two distributions over the same action space. The $f$-divergence between $q$ and $p$ is defined as:

\begin{equation}
\mathbb{D}_f(p\Vert q)=\mathbb{E}_{a\sim q}\left[f\!\left(\frac{p(a)}{q(a)}\right)\right],
\label{eq:fdiv_method_final}
\end{equation}

\noindent satisfying $f(1)=0$ and $f''(x)>0$. Thus, for discrete action space $\mathcal{A}$, we have:

\begin{equation}
\mathbb{D}_f(p\Vert q)=\sum\limits_{a\in\mathcal{A}}q(a)f\!\left(\frac{p(a)}{q(a)}\right).
\label{eq:fdiv_discrete_method}
\end{equation}

In our settings, $p$ corresponds to the current policy $\pi_\theta$, while $q$ corresponds to the frozen reference policy $\pi_{\text{ref}}$; the RKL variant corresponds to the standard GRPO-style reference-policy regularization baseline, while FKL and JS are obtained by replacing only the divergence-induced shaping term. Under this convention, $f(t)=t\log t$ corresponds to $\mathrm{KL}(\pi_\theta\Vert\pi_{\mathrm{ref}})$, which we refer to as reverse KL, while $f(t)=-\log t$ corresponds to $\mathrm{KL}(\pi_{\mathrm{ref}}\Vert\pi_\theta)$, which we refer to as forward KL. We compare a total of three representative choices from the $f$-divergence family:

\begin{tcolorbox}[halign=flush left]
Reverse KL: \quad $f_{\mathrm{RKL}}(t) = t\log t$, \quad
$f'_{\mathrm{RKL}}(t) = \log t + 1$ \\

Forward KL: \quad $f_{\mathrm{FKL}}(t) = -\log t$, \quad
$f'_{\mathrm{FKL}}(t) = -\frac{1}{t}$ \\

Jensen-Shannon: \quad
$f_{\mathrm{JS}}(t) = t \log \frac{2t}{1+t} + \log \frac{2}{1+t}$, \quad
$f'_{\mathrm{JS}}(t) = \log \frac{2t}{1+t}$
\label{eq:three_divergences_final}
\end{tcolorbox}

Then, we formalize the autoregressive T2I alignment setting. For a given prompt $x$, the current policy $\pi_\theta$ generates a group of $G$ image-token sequences:

\begin{equation}
\mathcal{Y}(x)=\{y^{(g)}\}_{g=1}^{G},\qquad y^{(g)}\sim \pi_\theta(\cdot|x).
\label{eq:group_sampling_method}
\end{equation}
These samples are evaluated by a reward model, based on which group-wise relative advantages are computed to facilitate token-level policy updates. To mitigate policy drift, a frozen reference policy $\pi_{\text{ref}}$ is employed for regularization. We maintain a consistent sampling strategy, advantage computation, and all other optimization schedules across all divergence variants. 

Finally, for a token $y_t^{(g)}$, we define the token-level importance ratio as $w_t^{(g)}=\pi_\theta(y_t^{(g)}|x,y_{<t}^{(g)})/\pi_{\text{ref}}(y_t^{(g)}|x,y_{<t}^{(g)})$, which quantifies the policy deviation from the reference model at each autoregressive generation step. Let $A_t^{(g)}$ denote the token-level advantage. In our implementation, we broadcast the sequence-level group-relative reward to each valid image token:

\begin{equation}
A_t^{(g)}=m_t^{(g)}\cdot \frac{R^{(g)}-\mathrm{mean}_{j}R^{(j)}}{\mathrm{std}_{j}R^{(j)}+\epsilon},
\end{equation}
where $m_t^{(g)}$ represents the visual-token mask. We can formulate the objective as:

\begin{equation}
\mathcal{J}(\theta)=\mathbb{E}_{x,\,y\sim\pi_\theta(\cdot|x)}\left[R(x,y)-\eta\sum_{t=1}^{T}\mathbb{D}_f\big(\pi_{\theta}(\cdot|x,y_{<t})\Vert\pi_{\text{ref}}(\cdot|x,y_{<t})\big)\right],
\label{eq:regularized_objective_method}
\end{equation}
where $x$ denotes the text prompt, $y$ represents the generated image-token sequence of length $T$, $R(x,y)$ is the sequence-level reward, and $\eta$ serves as the hyperparameter controlling the strength of the $f$-divergence regularization. In this formulation, the reward signal is converted into group-wise relative advantages, while the divergence term penalizes policy drift from $\pi_{\text{ref}}$.

\subsection{Gradient Analysis}

In this section, we further analyze how the choice of $f$-divergence affects token-level policy updates. The gradient of \Cref{eq:regularized_objective_method} can be decomposed as:

\begin{equation}
\nabla_{\theta}\mathcal{J}=\nabla_{\theta}\mathbb{E}_{x,\,y\sim\pi_\theta(\cdot|x)}\left[R(x,y)\right]-\eta\cdot\nabla_{\theta}\mathbb{E}_{x}\sum_{t=1}^{T}\mathbb{D}_f\big(\pi_{\theta}(\cdot|x,y_{<t})\Vert\pi_{\text{ref}}(\cdot|x,y_{<t})\big).
\label{eq:gradient_method}
\end{equation}
Firstly, we introduce \Cref{Policy Gradient Theorem}, showing formulation of policy gradient~\cite{sutton1999policy,thomas2017policy}.

\begin{lemma}[Policy Gradient Theorem]
\label{Policy Gradient Theorem}
For autoregressive generation, the policy gradient can be formulated as:
\begin{equation}
\nabla_{\theta}\mathbb{E}_{x,\,y\sim\pi_\theta(\cdot|x)}\left[R(x,y)\right]=\mathbb{E}_{x,\,y\sim\pi_\theta(\cdot|x)}\left[\sum_{t=1}^{T}\nabla_{\theta}\log\pi_{\theta}(y_t|x,y_{<t})A_t\right].
\label{eq:policy_gradient_method}
\end{equation}
\end{lemma}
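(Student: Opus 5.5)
The plan is the standard likelihood-ratio (REINFORCE) argument, specialized to the autoregressive factorization, followed by a baseline-subtraction step that replaces $R(x,y)$ with the group-relative advantage $A_t$.

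\textbf{Step 1: log-derivative trick.} Fix a prompt $x$; the outer expectation over $x$ does not depend on $\theta$ and can be moved outside the gradient. Since the image-token vocabulary and the length $T$ are finite, the expectation is a finite sum, so differentiation and summation commute. Using $\nabla_\theta \pi_\theta(y|x)=\pi_\theta(y|x)\nabla_\theta\log\pi_\theta(y|x)$, we obtain
\begin{equation*}
\nabla_\theta\mathbb{E}_{y\sim\pi_\theta(\cdot|x)}[R(x,y)]=\mathbb{E}_{y\sim\pi_\theta(\cdot|x)}\big[R(x,y)\,\nabla_\theta\log\pi_\theta(y|x)\big].
\end{equation*}

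\textbf{Step 2: autoregressive factorization.} Because $\pi_\theta(y|x)=\prod_{t=1}^{T}\pi_\theta(y_t|x,y_{<t})$, the sequence score splits as $\nabla_\theta\log\pi_\theta(y|x)=\sum_{t}\nabla_\theta\log\pi_\theta(y_t|x,y_{<t})$. This already gives the displayed identity with $A_t$ replaced by the raw reward $R(x,y)$ at every step. Tokens masked out by $m_t$, namely non-visual tokens, are treated as not governed by $\pi_\theta$, so they contribute zero in both forms.

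\textbf{Step 3: baseline invariance.} This is the main obstacle: showing that $R(x,y)$ may be replaced by $A_t=m_t\,(R-b)/s$. For any baseline $b$ that depends only on $(x,y_{<t})$ and not on $y_t$, the identity $\mathbb{E}_{y_t\sim\pi_\theta(\cdot|x,y_{<t})}[\nabla_\theta\log\pi_\theta(y_t|x,y_{<t})]=\nabla_\theta\sum_{a}\pi_\theta(a|x,y_{<t})=\nabla_\theta 1=0$ together with the tower property shows that subtracting $b$ leaves the expectation unchanged.

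The group mean and standard deviation, however, are computed from the $G$ samples, including the current one. I would therefore state the equality under the usual idealization, treating $\mathrm{mean}_j R^{(j)}$ as $\mathbb{E}[R|x]$ and $\mathrm{std}_j R^{(j)}+\epsilon$ as a positive constant $s(x)$. The first is a valid baseline, since it does not depend on the sampled sequence. The second only rescales the gradient by a positive factor, so the gradient is preserved up to this per-prompt scaling, which can be absorbed into the step size.

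I would remark that the leave-one-out mean gives exact unbiasedness. With the finite-group estimates, the identity holds up to $O(1/G)$ bias and the scaling factor, which is standard in GRPO analyses. The lemma should be read in this sense, with $A_t$ understood as the (normalized) advantage.
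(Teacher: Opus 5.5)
Your argument is correct and is the standard likelihood-ratio derivation (log-derivative trick, autoregressive factorization, baseline invariance) behind the result the paper invokes. The paper gives no proof of its own; it only cites Sutton et al.\ and Thomas and Brunskill, and never justifies replacing $R$ by the group-normalized $A_t$ of Eq.~(4), so your idealization caveat is a needed addition.

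One correction: a per-prompt factor $1/s(x)$ cannot be absorbed into a single step size. It yields $\mathbb{E}_x\bigl[s(x)^{-1}\nabla_\theta\mathbb{E}[R\mid x]\bigr]$, which is the gradient of a prompt-reweighted objective and not a positive multiple of $\nabla_\theta\mathbb{E}_{x,y}[R]$. The mean-only group baseline, by contrast, is harmless even when it includes the current sample. Cross terms vanish by independence, so it gives exactly $(1-1/G)\,\nabla_\theta\mathbb{E}[R\mid x]$, a uniform rescaling. The only genuine directional bias therefore comes from the sample-dependent standard deviation.
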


Furthermore, let's consider the latter part of \Cref{eq:gradient_method}. Since the GRPO paradigm is formulated under the trust-region and on-policy framework, the state visitation distribution change is negligible within each update. Thus,

\begin{equation}
\begin{split}
&\nabla_{\theta}\mathbb{E}_{x}\sum_{t=1}^{T}\mathbb{D}_f\big(\pi_{\theta}(\cdot|x,y_{<t})\Vert\pi_{\text{ref}}(\cdot|x,y_{<t})\big)\\ =& \mathbb{E}_{x}\sum_{t=1}^{T}\nabla_{\theta}\mathbb{D}_f\big(\pi_{\theta}(\cdot|x,y_{<t})\Vert\pi_{\text{ref}}(\cdot|x,y_{<t})\big).
\label{eq:state_dist_approx_method}
\end{split}
\end{equation}
For a fixed prefix $y_{<t}$, we have that:

\begin{equation}
\begin{aligned}
&\nabla_{\theta}\mathbb{D}_f\big(\pi_{\theta}(\cdot|x,y_{<t})\Vert\pi_{\text{ref}}(\cdot|x,y_{<t})\big) \\
=&\sum\limits_{v\in\mathcal{V}}\pi_{\text{ref}}(v|x,y_{<t})f'\!\left(\frac{\pi_{\theta}(v|x,y_{<t})}{\pi_{\text{ref}}(v|x,y_{<t})}\right)\cdot\nabla_{\theta}\frac{\pi_{\theta}(v|x,y_{<t})}{\pi_{\text{ref}}(v|x,y_{<t})}\\
=&\sum\limits_{v\in\mathcal{V}}f'\!\left(\frac{\pi_{\theta}(v|x,y_{<t})}{\pi_{\text{ref}}(v|x,y_{<t})}\right)\cdot\nabla_{\theta}\pi_{\theta}(v|x,y_{<t})\\
=&\sum\limits_{v\in\mathcal{V}}f'\!\left(\frac{\pi_{\theta}(v|x,y_{<t})}{\pi_{\text{ref}}(v|x,y_{<t})}\right)\cdot\pi_{\theta}(v|x,y_{<t})\nabla_{\theta}\log\pi_{\theta}(v|x,y_{<t})\\
=&\mathbb{E}_{v\sim\pi_{\theta}}\left[f'\!\left(\frac{\pi_{\theta}(v|x,y_{<t})}{\pi_{\text{ref}}(v|x,y_{<t})}\right)\cdot\nabla_{\theta}\log\pi_{\theta}(v|x,y_{<t})\right].
\end{aligned}
\label{eq:f_div_gradient_method}
\end{equation}
Further combine \Cref{eq:policy_gradient_method} and \Cref{eq:f_div_gradient_method}, we obtain:

\begin{equation}
\nabla_{\theta}\mathcal{J}=\mathbb{E}_{x,\,y\sim\pi_\theta(\cdot|x)}\Bigg[\sum_{t=1}^{T}\nabla_{\theta}\log\pi_{\theta}(y_t|x,y_{<t})\cdot\underbrace{\left(A_t-\eta f'\!\left(\frac{\pi_{\theta}(y_t|x,y_{<t})}{\pi_{\text{ref}}(y_t|x,y_{<t})}\right)\right)}_{\hat{A}_t}\Bigg].
\label{eq:gradient_merged_advantage_method}
\end{equation}
Specializing this objective to autoregressive image-token generation gives:

\begin{equation}
\nabla_\theta\mathcal{J}(\theta)=\mathbb{E}\left[\sum_{g=1}^{G}\sum_{t}\nabla_\theta\log\pi_\theta(y_t^{(g)}|x,y_{<t}^{(g)})\cdot\left(A_t^{(g)}-\eta f'(w_t^{(g)})\right)\right].
\label{eq:grad_main_final}
\end{equation}
Hence, the divergence term modulates the token-level update via a merged advantage: 

\begin{equation}
\hat{A}_t^{(g)}=A_t^{(g)}-\eta f'(w_t^{(g)}).
\label{eq:merged_adv_final}
\end{equation}
\Cref{eq:merged_adv_final} reveals the key insight of our method: identical GRPO-style group sampling and reward design, different divergences induce distinct optimization dynamics solely through the form of $f'(w)$. The three divergences therefore lead to different gradient-shaping effects. Specifically, we have that:

\begin{equation}
f'_{\mathrm{RKL}}(1)=1,\qquad f'_{\mathrm{FKL}}(1)=-1,\qquad f'_{\mathrm{JS}}(1)=0.
\label{eq:local_behavior_method}
\end{equation}

Therefore, under the sampled-token shaping formulation utilized in our implementation, JS introduces no constant offset at $w_t=1$, whereas the uncentered RKL generator retains a positive offset. This makes JS more balanced around the reference policy while still suppressing overly aggressive updates when $w_t>1$. Under this formulation, forward KL satisfies $f'_{\mathrm{FKL}}(x)=-1/x<0$ for all $x>0$, while reverse KL satisfies $f'_{\mathrm{RKL}}(x)=\log x+1$. Moreover, the following proposition derives comparative inequalities of $f'(x)$ under the three divergences.

\begin{proposition}
For all $x>0$, we have
\begin{equation}
-\frac{1}{x}<\log\frac{2x}{1+x}<\log x+1.
\label{eq:ordering_final}
\end{equation}
\end{proposition}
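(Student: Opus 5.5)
We prove the two inequalities in \Cref{eq:ordering_final} separately. In each case we rewrite the claim as positivity of a one-variable function on $(0,\infty)$ and use elementary bounds on the logarithm, chiefly $\log u \le u-1$ with equality only at $u=1$, equivalently $\log u \ge 1-\frac{1}{u}$.

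\emph{Right inequality.} We have
\begin{equation*}
\log x + 1 - \log\frac{2x}{1+x} = 1 + \log\frac{1+x}{2}.
\end{equation*}
So we must show $\log\frac{1+x}{2} > -1$ for all $x>0$. Since $\frac{1+x}{2}>\frac{1}{2}$, the left side exceeds $\log\frac12=-\log 2\approx -0.693>-1$. This half is immediate.

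\emph{Left inequality.} Define
\begin{equation*}
h(x) = \log\frac{2x}{1+x} + \frac{1}{x}.
\end{equation*}
Our first attempt is the bound $\log u \ge 1-\frac1u$ with $u=\frac{2x}{1+x}$. This gives
\begin{equation*}
h(x) \ge 1 - \frac{1+x}{2x} + \frac1x = \frac{x+1}{2x} > 0,
\end{equation*}
which finishes the proof in one line. As a fallback we would analyze $h$ directly. Its derivative is
\begin{equation*}
h'(x) = \frac{1}{x} - \frac{1}{1+x} - \frac{1}{x^2} = \frac{1}{x(1+x)} - \frac{1}{x^2} = -\frac{1}{x^2(1+x)} < 0,
\end{equation*}
so $h$ is strictly decreasing. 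Moreover $h(x)\to \log 2 >0$ as $x\to\infty$. Hence $h(x)>\log 2>0$ for all $x>0$, which independently confirms the claim.

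The only step needing care is the strictness of the left inequality. The tangent-line bound already gives a strictly positive lower bound, and the monotonicity argument gives $h>\log 2$ as well, so no delicate case analysis near $x\to 0^+$ should be required. There $h\to+\infty$ anyway, since $\frac1x$ dominates $\log(2x)$.
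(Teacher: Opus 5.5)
Your proof is correct, and for the left inequality your primary argument takes a different route from the paper.

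For the right inequality you do essentially what the paper does. The paper notes $x>0>\frac{2}{e}-1$, i.e.\ $2<e(1+x)$, hence $\frac{2x}{1+x}<ex$. That is the same fact $\log\frac{1+x}{2}>-1$ you obtain from $\frac{1+x}{2}>\frac12>\frac1e$.

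For the left inequality, the paper's proof is exactly your fallback: it computes the derivative $-\frac{1}{x^2(1+x)}$, concludes strict monotonicity, and uses the limit $\log 2$ at infinity. Your main argument instead applies the tangent-line bound $\log u\ge 1-\frac1u$ at $u=\frac{2x}{1+x}$, giving $h(x)\ge\frac{1+x}{2x}>0$ in one algebraic step.

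Your route avoids differentiation and the limit argument. It gives an explicit pointwise lower bound, which is attained at $x=1$ and also captures the blow-up as $x\to0^+$. Strictness comes for free because the bound itself is strictly positive, so you never need the equality case of the tangent-line inequality.

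The paper's monotonicity route gives the sharper uniform bound $h(x)>\log 2$, which is tight as $x\to\infty$; your bound tends only to $\frac12<\log 2$ there. For the proposition as stated either argument suffices, so your fallback paragraph is redundant rather than needed.
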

\begin{proof}
Firstly, we consider left part of this inequality. Define $g(x):=\log\frac{2x}{1+x}+\frac{1}{x}$ for $x>0$. The derivative of $g(x)$ can be expressed as:

\begin{equation}
g'(x)=\left(\frac{1}{x}-\frac{1}{1+x}\right)-\frac{1}{x^2}=-\frac{1}{x^2(1+x)}<0.
\end{equation}
Hence, $g(x)$ is strictly decreasing. Furthermore, we have that:

\begin{equation}
\lim_{x\to\infty}g(x)=\lim_{x\to\infty}\left(\log\frac{2x}{1+x}+\frac{1}{x}\right)=\log2>0.
\end{equation}
Therefore, for all $x>0$, we have $g(x)>0$, which means that:
\begin{equation}
\log\frac{2x}{1+x}>-\frac{1}{x}.
\label{eq:proposition_left_method}
\end{equation}
Next, consider the right part of this inequality. Since $x>0$, we have $x>\frac{2}{e}-1$, implying that $2<e(x+1)$, and thus we have $2x/(1+x)<ex$. By applying the logarithm to both sides, we can obtain that:
\begin{equation}
\log\frac{2x}{1+x}<\log x+1.
\label{eq:proposition_right_method}
\end{equation}
Combining \Cref{eq:proposition_left_method} and \Cref{eq:proposition_right_method}, we readily obtain \Cref{eq:ordering_final}, which completes the proof.
\end{proof}

\emergencystretch=1em
Therefore, reverse KL divergence yields the largest derivative for over-represented tokens with $x>1$, followed by JS divergence, while forward KL divergence satisfies $f'_{\mathrm{FKL}}(x)<0$ for all $x>0$. Within the sampled-token shaping form, this implies that forward KL divergence contributes a positive offset $\eta/x$ to the merged advantage, i.e., $\hat A=A+\eta/x$, and thus behaves as an overly weak trust-region regularizer for sampled tokens. In contrast, reverse KL divergence subtracts $\eta\cdot(\log x+1)$ from the advantage, leading to an excessively strong suppression when the current policy assigns substantially higher probability than the reference policy. The JS divergence lies between these two extremes: for $x>1$, $f'_{\mathrm{JS}}(x)>0$, resulting in suppressed updates; for $x<1$, $f'_{\mathrm{JS}}(x)<0$, leading to amplified updates. Moreover, $f'_{\mathrm{JS}}(1)=0$, implying that JS divergence introduces no constant offset at the reference point under the sampled-token gradient shaping formulation. Therefore, JS divergence provides a more balanced regularization effect between policy improvement and reference-policy preservation.

\subsection{Practical JS Divergence Computation}
In implementation, rather than explicitly computing the full-vocabulary JS divergence value, we instantiate JS regularization by modifying the shaping term in the merged advantage. Specifically, for the JS divergence, the merged advantage is given by:

\begin{equation}
\hat{A}^{\mathrm{JS}}_t=A_t-\eta\cdot f'_{\mathrm{JS}}(w_t).
\label{eq:js_merged_adv_method}
\end{equation}
To ensure numerical stability, we perform all computations of log-ratio $\Delta_t$ with: 
\begin{equation}
\Delta_t=\log\pi_\theta(y_t|x,y_{<t})-\log\pi_{\text{ref}}(y_t|x,y_{<t}).
\label{eq:log_ratio_method}
\end{equation}
Consequently, the JS-specific shaping term $f'_{\mathrm{JS}}(w_t)$ is formulated as:
\begin{equation}
f'_{\mathrm{JS}}(w_t)=\log2+\Delta_t-\log(1+\exp(\Delta_t)).
\label{eq:js_log_ratio_method}
\end{equation}
During mixed-precision training, $\Delta_t$ is clipped when necessary before evaluating the log-ratio term. All other components of the GRPO pipeline, including sampling strategy, advantage computation, and optimization schedules, are kept consistent, so that comparison primarily reflects the effect of divergence choice.

\begin{table}[t]\small
\caption{Quantitative results on performance of Janus-7B and LlamaGen variants.}
\label{tab:main_results_detail}
\centering
\begin{tabular}{l|l|c|c|c|c}
\hline
Benchmark & Model & CLIPScore $\uparrow$ & HPS-v2.1 $\uparrow$ & ImageReward $\uparrow$ & VQAScore $\uparrow$ \\
\hline
\multirow{3}{*}{DPG-Bench} 
& Janus-7B-RKL & 0.3923 & 0.2737 & 0.8152 & 0.8859 \\
& Janus-7B-FKL & 0.3923 & \textbf{0.2748} & 0.8333 & 0.8849 \\
& Janus-7B-JS  & \textbf{0.3930} & \textbf{0.2748} & \textbf{0.8371} & \textbf{0.8881} \\
\hline
\multirow{3}{*}{GenAI} 
& Janus-7B-RKL & 0.3697 & 0.2828 & 1.0569 & \textbf{0.7325} \\
& Janus-7B-FKL & 0.3694 & 0.2827 & 1.0407 & 0.7289 \\
& Janus-7B-JS  & \textbf{0.3706} & \textbf{0.2834} & \textbf{1.0713} & 0.7308 \\
\hline
\multirow{3}{*}{GenEval} 
& Janus-7B-RKL & \textbf{0.3838} & 0.2872 & 1.1237 & 0.8579 \\
& Janus-7B-FKL & 0.3817 & 0.2875 & \textbf{1.1281} & 0.8548 \\
& Janus-7B-JS  & 0.3836 & \textbf{0.2881} & 1.1272 & \textbf{0.8582} \\
\hline
\multirow{3}{*}{DPG-Bench} 
& LlamaGen-RKL & 0.2725 & 0.2209 & -0.8221 & 0.6203 \\
& LlamaGen-FKL & 0.2686 & 0.2099 & -0.9455 & 0.6257 \\
& LlamaGen-JS  & \textbf{0.2739} & \textbf{0.2214} & \textbf{-0.8037} & \textbf{0.6312} \\
\hline
\multirow{3}{*}{GenAI} 
& LlamaGen-RKL & \textbf{0.3338} & \textbf{0.2737} & \textbf{0.4788} & \textbf{0.6302} \\
& LlamaGen-FKL & 0.3320 & 0.2725 & 0.4538 & 0.6295 \\
& LlamaGen-JS  & 0.3301 & 0.2735 & 0.4749 & 0.6245 \\
\hline
\multirow{3}{*}{GenEval} 
& LlamaGen-RKL & \textbf{0.3315} & 0.2712 & 0.1360 & 0.6582 \\
& LlamaGen-FKL & 0.3294 & 0.2704 & 0.1487 & 0.6619 \\
& LlamaGen-JS  & 0.3268 & \textbf{0.2719} & \textbf{0.1580} & \textbf{0.6692} \\
\hline
\end{tabular}
\end{table}

\begin{figure}[t]
\centering
\includegraphics[width=0.8\textwidth]{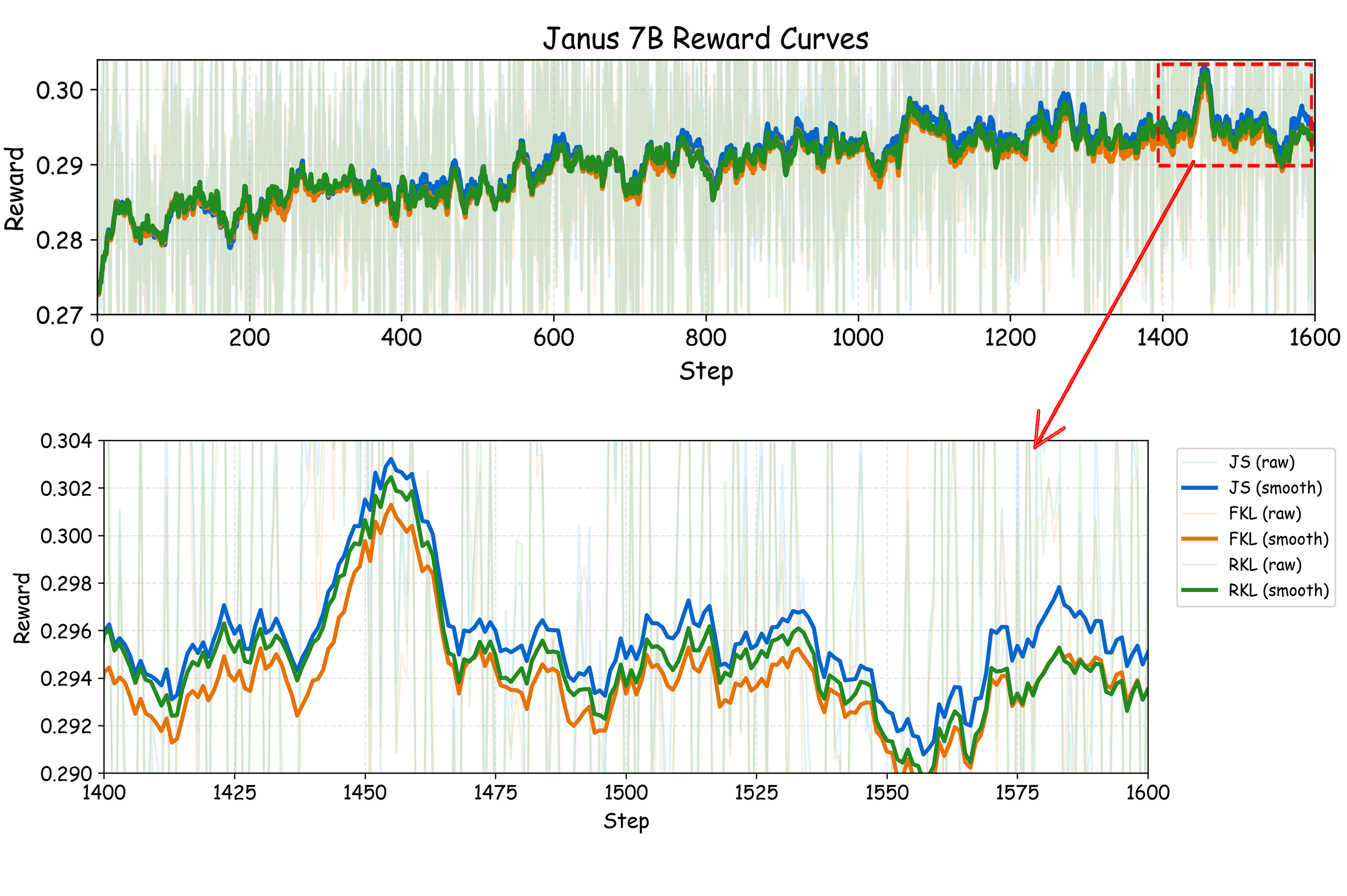}

\caption{Training reward dynamics under varying divergence regularization, showing HPS-v2 reward trajectories of Janus-7B over 1600 steps.}
\label{fig:reward_curves}
\end{figure}

\begin{table}[t]
\caption{Quantitative results on generative diversity of Janus-7B and LlamaGen variants.}
\label{tab:image_diversity_detail}
\centering
\resizebox{\textwidth}{!}{
\begin{tabular}{l|l|c|c|c|c|c|c|c|c}
\hline
Benchmark & Model & 1D Ent. $\uparrow$ & 2D Ent. $\uparrow$ & FSIM $\downarrow$ & I-I CLIP $\downarrow$ & LPIPS $\uparrow$ & RMSE $\uparrow$ & PSNR $\downarrow$ & SSIM $\downarrow$ \\
\hline
\multirow{6}{*}{DPG-Bench} 
& Janus-7B-RKL & 7.3708 & 13.6531 & \textbf{0.3301} & \textbf{0.9015} & 0.4751 & \textbf{0.0171} & 35.62 & 0.8065 \\
& Janus-7B-JS  & \textbf{7.3867} & \textbf{13.6849} & 0.3313 & 0.9031 & 0.4724 & 0.0170 & 35.63 & 0.8075 \\
& Janus-7B-FKL & 7.3805 & 13.6670 & 0.3306 & 0.9039 & \textbf{0.4760} & \textbf{0.0171} & \textbf{35.61} & \textbf{0.8064} \\
& LlamaGen-RKL & 7.4510 & 14.0465 & \textbf{0.3108} & \textbf{0.7782} & \textbf{0.5594} & \textbf{0.0216} & \textbf{33.42} & \textbf{0.7303} \\
& LlamaGen-JS  & \textbf{7.4617} & \textbf{14.0582} & 0.3160 & 0.7906 & 0.5443 & 0.0211 & 33.59 & 0.7412 \\
& LlamaGen-FKL & 7.3035 & 13.6840 & 0.3154 & 0.7833 & 0.5275 & 0.0206 & 33.82 & 0.7458 \\
\hline
\multirow{6}{*}{GenAI} 
& Janus-7B-RKL & 7.5014 & 13.6044 & \textbf{0.3566} & 0.9074 & 0.4173 & 0.0146 & 37.05 & 0.8517 \\
& Janus-7B-JS  & \textbf{7.5081} & \textbf{13.6332} & 0.3569 & 0.9078 & \textbf{0.4195} & \textbf{0.0147} & \textbf{36.95} & \textbf{0.8498} \\
& Janus-7B-FKL & 7.5067 & 13.6063 & 0.3571 & \textbf{0.9066} & 0.4185 & 0.0146 & 37.00 & 0.8505 \\
& LlamaGen-RKL & \textbf{7.3466} & \textbf{13.6565} & \textbf{0.3018} & 0.8438 & \textbf{0.5683} & \textbf{0.0214} & \textbf{33.45} & 0.7341 \\
& LlamaGen-JS  & 7.3404 & 12.7302 & 0.3045 & 0.8418 & 0.5601 & 0.0211 & 33.63 & 0.7430 \\
& LlamaGen-FKL & 7.2720 & 13.4068 & 0.3031 & \textbf{0.8417} & 0.5584 & 0.0211 & 33.50 & \textbf{0.7332} \\
\hline
\multirow{6}{*}{GenEval} 
& Janus-7B-RKL & 6.9902 & 12.1728 & \textbf{0.3238} & \textbf{0.9154} & \textbf{0.4178} & \textbf{0.0167} & \textbf{35.92} & \textbf{0.8226} \\
& Janus-7B-JS  & 6.9868 & 12.1632 & 0.3260 & 0.9192 & 0.4130 & 0.0166 & 35.99 & 0.8258 \\
& Janus-7B-FKL & \textbf{6.9967} & \textbf{12.1868} & 0.3272 & 0.9179 & 0.4093 & \textbf{0.0167} & 36.02 & 0.8248 \\
& LlamaGen-RKL & \textbf{6.9604} & 12.7253 & 0.2849 & 0.8175 & \textbf{0.5623} & 0.0232 & 32.87 & 0.7235 \\
& LlamaGen-JS  & 6.9595 & \textbf{12.7302} & 0.2864 & 0.8254 & 0.5509 & 0.0228 & 32.99 & 0.7313 \\
& LlamaGen-FKL & 6.6873 & 12.1256 & \textbf{0.2840} & \textbf{0.8160} & 0.5522 & \textbf{0.0236} & \textbf{32.70} & \textbf{0.7198} \\
\hline
\end{tabular}
}
\end{table}
\section{Experiments}

\subsection{Experimental Setup}
We train three full-parameter RL variants of LlamaGen~\cite{sun2024llamagen} and Janus-7B~\cite{wu2025janus}, differing only in the reference-policy regularizer: reverse KL, forward KL, and JS divergence.  All variants use HPS-v2.1~\cite{wu2023hpsv2} as the online reward model and are trained on four NVIDIA A100 80GB GPUs. Janus is trained for 1,600 steps with a learning rate of $1\times10^{-6}$, batch size 2, and a linear schedule; LlamaGen is trained for 900 steps with a learning rate of $1\times10^{-5}$, batch size 32, and a cosine schedule. We evaluate on DPG-Bench~\cite{hu2024ella}, GenAI-Bench~\cite{li2024genaibench}, and GenEval~\cite{ghosh2023geneval}.

For performance evaluation, we report CLIPScore~\cite{radford2021learning}, HPS-v2.1~\cite{wu2023human}, ImageReward~\cite{xu2023imagereward}, and VQAScore~\cite{lin2024evaluating}, where higher values indicate better optimization performance. For diversity analysis, we follow the generation diversity evaluation protocol in work~\cite{sun2025generalizing}, reporting Image--Image CLIP~\cite{radford2021learning}, 1D Entropy~\cite{shannon1948mathematical}, 2D Entropy~\cite{abutaleb1989automatic}, LPIPS~\cite{zhang2018unreasonable}, RMSE~\cite{chai2014root}, PSNR~\cite{huynh2008scope}, SSIM~\cite{wang2004image}, and FSIM~\cite{zhang2011fsim}. These complementary metrics jointly characterize the trade-off between optimization performance and generation diversity.

Table~\ref{tab:image_diversity_detail} summarizes the averaged diversity metrics, where arrows indicate the direction favoring greater diversity. These metrics reveal a pronounced trade-off across divergences. RKL is skewed heavily toward extreme pairwise discrepancy: the lowest FSIM and Image--Image CLIP on DPG-Bench and GenEval Janus-7B are achieved by RKL, indicating the strongest feature-level and semantic variation; meanwhile, the highest LPIPS coupled with the lowest PSNR and SSIM are recorded by RKL on DPG-Bench of LlamaGen, reflecting the most pronounced perceptual and structural deviation. FKL yields an uneven profile, topping 1D and 2D Entropy on GenEval of Janus while collapsing to the lowest entropy and the poorest pairwise-difference performance on DPG-Bench of LlamaGen. JS demonstrates a more balanced advantage, achieving the highest 1D or 2D Entropy in seven of the twelve entropy entries and ranking second in two others. Rather than pushing any single sample-difference metric to an extreme, JS maintains well-rounded scores across all dimensions. Coupled with the best average HPS, ImageReward, and VQAScore in Table~\ref{tab:main_results_detail}, JS therefore realizes a balanced performance-diversity trade-off: optimization performance is improved without collapsing into either excessive variation-seeking or undue conservatism, preserving broad visual-distribution coverage and non-trivial spatial complexity.

\subsection{Training Dynamics}
As illustrated in Figure~\ref{fig:reward_curves}, the training reward profiles show that JS achieves strong and competitive optimization performance, maintaining a slight reward advantage on Janus-7B in the later stage of training. RKL follows a steadier conservative path, while FKL underperforms. These empirical trends are consistent with our gradient analysis, which indicates that distinct $f'(w)$ terms effectively modulate token-level advantages, thereby dictating divergent optimization paths.

\section{Conclusion}

In this work, we revisit GRPO-style online reinforcement learning for autoregressive text-to-image alignment through a unified $f$-divergence perspective. Our analysis shows that the choice of divergence is not merely a regularization detail, but a key factor that fundamentally shapes token-level optimization dynamics. Within the sampled-token shaping formulation, JS divergence emerges as a particularly effective choice. Empirical results on LlamaGen and Janus-7B support our analysis, demonstrating that JS divergence regularization consistently delivers strong or competitive optimization performance across metrics while preserving generation diversity. These findings also highlight the importance of carefully selecting divergence objectives for effective optimization in autoregressive T2I generation.

\bibliographystyle{plainnat}
\bibliography{references}

@article{zhang2025group,
  title={Group Critical-token Policy Optimization for Autoregressive Image Generation},
  author={Zhang, Guohui and Yu, Hu and Ma, Xiaoxiao and Zhang, JingHao and Pan, Yaning and Yao, Mingde and Xiao, Jie and Huang, Linjiang and Zhao, Feng},
  journal={arXiv preprint arXiv:2509.22485},
  year={2025}
}

@article{shao2024deepseekmath,
  title={Deepseekmath: Pushing the limits of mathematical reasoning in open language models},
  author={Shao, Zhihong and Wang, Peiyi and Zhu, Qihao and Xu, Runxin and Song, Junxiao and Bi, Xiao and Zhang, Haowei and Zhang, Mingchuan and Li, YK and Wu, Yang and others},
  journal={arXiv preprint arXiv:2402.03300},
  year={2024}
}

@article{sutton1999policy,
  title={Policy gradient methods for reinforcement learning with function approximation},
  author={Sutton, Richard S and McAllester, David and Singh, Satinder and Mansour, Yishay},
  journal={Advances in neural information processing systems},
  volume={12},
  year={1999}
}

@article{thomas2017policy,
  title={Policy gradient methods for reinforcement learning with function approximation and action-dependent baselines},
  author={Thomas, Philip S and Brunskill, Emma},
  journal={arXiv preprint arXiv:1706.06643},
  year={2017}
}

@article{kullback1951information,
  title={On information and sufficiency},
  author={Kullback, Solomon and Leibler, Richard A},
  journal={The annals of mathematical statistics},
  volume={22},
  number={1},
  pages={79--86},
  year={1951},
  publisher={JSTOR}
}

@article{lin1991divergence,
  title={Divergence measures based on the Shannon entropy},
  author={Lin, Jianhua},
  journal={IEEE Transactions on Information theory},
  volume={37},
  number={1},
  pages={145--151},
  year={1991},
  publisher={IEEE}
}

@article{csiszar1967information,
  title={On information-type measure of difference of probability distributions and indirect observations},
  author={Csisz{\'a}r, Imre},
  journal={Studia Sci. Math. Hungar.},
  volume={2},
  pages={299--318},
  year={1967}
}

@article{ouyang2022training,
  title={Training language models to follow instructions with human feedback},
  author={Ouyang, Long and Wu, Jeffrey and Jiang, Xu and Almeida, Diogo and Wainwright, Carroll and Mishkin, Pamela and Zhang, Chong and Agarwal, Sandhini and Slama, Katarina and Ray, Alex and others},
  journal={Advances in neural information processing systems},
  volume={35},
  pages={27730--27744},
  year={2022}
}

@article{fan2023dpok,
  title={Dpok: Reinforcement learning for fine-tuning text-to-image diffusion models},
  author={Fan, Ying and Watkins, Olivia and Du, Yuqing and Liu, Hao and Ryu, Moonkyung and Boutilier, Craig and Abbeel, Pieter and Ghavamzadeh, Mohammad and Lee, Kangwook and Lee, Kimin},
  journal={Advances in Neural Information Processing Systems},
  volume={36},
  pages={79858--79885},
  year={2023}
}

@article{schulman2017proximal,
  title={Proximal policy optimization algorithms},
  author={Schulman, John and Wolski, Filip and Dhariwal, Prafulla and Radford, Alec and Klimov, Oleg},
  journal={arXiv preprint arXiv:1707.06347},
  year={2017}
}

@article{goodfellow2014generative,
  title={Generative adversarial nets},
  author={Goodfellow, Ian J and Pouget-Abadie, Jean and Mirza, Mehdi and Xu, Bing and Warde-Farley, David and Ozair, Sherjil and Courville, Aaron and Bengio, Yoshua},
  journal={Advances in neural information processing systems},
  volume={27},
  year={2014}
}

@inproceedings{sun2025generalizing,
  title={Generalizing alignment paradigm of text-to-image generation with preferences through f-divergence minimization},
  author={Sun, Haoyuan and Xia, Bo and Chang, Yongzhe and Wang, Xueqian},
  booktitle={Proceedings of the AAAI Conference on Artificial Intelligence},
  volume={39},
  pages={27644--27652},
  year={2025}
}

@inproceedings{wang2024beyond,
  title={Beyond reverse kl: Generalizing direct preference optimization with diverse divergence constraints},
  author={Wang, Chaoqi and Jiang, Yibo and Yang, Chenghao and Liu, Han and Chen, Yuxin},
  booktitle={International Conference on Learning Representations},
  volume={2024},
  pages={10450--10480},
  year={2024}
}

@article{go2023aligning,
  title={Aligning language models with preferences through f-divergence minimization},
  author={Go, Dongyoung and Korbak, Tomasz and Kruszewski, Germ{\'a}n and Rozen, Jos and Ryu, Nahyeon and Dymetman, Marc},
  journal={arXiv preprint arXiv:2302.08215},
  year={2023}
}

@inproceedings{wallace2024diffusion,
  title={Diffusion model alignment using direct preference optimization},
  author={Wallace, Bram and Dang, Meihua and Rafailov, Rafael and Zhou, Linqi and Lou, Aaron and Purushwalkam, Senthil and Ermon, Stefano and Xiong, Caiming and Joty, Shafiq and Naik, Nikhil},
  booktitle={Proceedings of the IEEE/CVF Conference on Computer Vision and Pattern Recognition},
  pages={8228--8238},
  year={2024}
}

@inproceedings{yang2024using,
  title={Using human feedback to fine-tune diffusion models without any reward model},
  author={Yang, Kai and Tao, Jian and Lyu, Jiafei and Ge, Chunjiang and Chen, Jiaxin and Shen, Weihan and Zhu, Xiaolong and Li, Xiu},
  booktitle={Proceedings of the IEEE/CVF Conference on Computer Vision and Pattern Recognition},
  pages={8941--8951},
  year={2024}
}

@article{liang2024step,
  title={Step-aware preference optimization: Aligning preference with denoising performance at each step},
  author={Liang, Zhanhao and Yuan, Yuhui and Gu, Shuyang and Chen, Bohan and Hang, Tiankai and Li, Ji and Zheng, Liang},
  journal={arXiv preprint arXiv:2406.04314},
  volume={2},
  number={5},
  pages={7},
  year={2024}
}

@article{li2024aligning,
  title={Aligning diffusion models by optimizing human utility},
  author={Li, Shufan and Kallidromitis, Konstantinos and Gokul, Akash and Kato, Yusuke and Kozuka, Kazuki},
  journal={Advances in Neural Information Processing Systems},
  volume={37},
  pages={24897--24925},
  year={2024}
}

@article{ghosh2023geneval,
  title={Geneval: An object-focused framework for evaluating text-to-image alignment},
  author={Ghosh, Dhruba and Hajishirzi, Hannaneh and Schmidt, Ludwig},
  journal={Advances in Neural Information Processing Systems},
  volume={36},
  pages={52132--52152},
  year={2023}
}

@article{li2024genaibench,
  title={Genai-bench: Evaluating and improving compositional text-to-visual generation},
  author={Li, Baiqi and Lin, Zhiqiu and Pathak, Deepak and Li, Jiayao and Fei, Yixin and Wu, Kewen and Ling, Tiffany and Xia, Xide and Zhang, Pengchuan and Neubig, Graham and others},
  journal={arXiv preprint arXiv:2406.13743},
  year={2024}
}

@article{hu2024ella,
  title={Ella: Equip diffusion models with llm for enhanced semantic alignment},
  author={Hu, Xiwei and Wang, Rui and Fang, Yixiao and Fu, Bin and Cheng, Pei and Yu, Gang},
  journal={arXiv preprint arXiv:2403.05135},
  year={2024}
}

@article{shannon1948mathematical,
  title={A mathematical theory of communication},
  author={Shannon, Claude Elwood},
  journal={The Bell system technical journal},
  volume={27},
  number={3},
  pages={379--423},
  year={1948},
  publisher={Nokia Bell Labs}
}

@article{abutaleb1989automatic,
  title={Automatic thresholding of gray-level pictures using two-dimensional entropy},
  author={Abutaleb, Ahmed S},
  journal={Computer vision, graphics, and image processing},
  volume={47},
  number={1},
  pages={22--32},
  year={1989},
  publisher={Elsevier}
}

@article{zhang2011fsim,
  title={FSIM: A feature similarity index for image quality assessment},
  author={Zhang, Lin and Zhang, Lei and Mou, Xuanqin and Zhang, David},
  journal={IEEE transactions on Image Processing},
  volume={20},
  number={8},
  pages={2378--2386},
  year={2011},
  publisher={IEEE}
}

@inproceedings{radford2021learning,
  title={Learning transferable visual models from natural language supervision},
  author={Radford, Alec and Kim, Jong Wook and Hallacy, Chris and Ramesh, Aditya and Goh, Gabriel and Agarwal, Sandhini and Sastry, Girish and Askell, Amanda and Mishkin, Pamela and Clark, Jack and others},
  booktitle={International conference on machine learning},
  pages={8748--8763},
  year={2021},
  organization={PmLR}
}

@inproceedings{wu2025janus,
  title={Janus: Decoupling visual encoding for unified multimodal understanding and generation},
  author={Wu, Chengyue and Chen, Xiaokang and Wu, Zhiyu and Ma, Yiyang and Liu, Xingchao and Pan, Zizheng and Liu, Wen and Xie, Zhenda and Yu, Xingkai and Ruan, Chong and others},
  booktitle={Proceedings of the Computer Vision and Pattern Recognition Conference},
  pages={12966--12977},
  year={2025}
}

@article{sun2024llamagen,
  title={Autoregressive model beats diffusion: Llama for scalable image generation},
  author={Sun, Peize and Jiang, Yi and Chen, Shoufa and Zhang, Shilong and Peng, Bingyue and Luo, Ping and Yuan, Zehuan},
  journal={arXiv preprint arXiv:2406.06525},
  year={2024}
}

@article{wu2023hpsv2,
  title={Human preference score v2: A solid benchmark for evaluating human preferences of text-to-image synthesis},
  author={Wu, Xiaoshi and Hao, Yiming and Sun, Keqiang and Chen, Yixiong and Zhu, Feng and Zhao, Rui and Li, Hongsheng},
  journal={arXiv preprint arXiv:2306.09341},
  year={2023}
}

@inproceedings{zhang2018unreasonable,
  title={The unreasonable effectiveness of deep features as a perceptual metric},
  author={Zhang, Richard and Isola, Phillip and Efros, Alexei A and Shechtman, Eli and Wang, Oliver},
  booktitle={Proceedings of the IEEE conference on computer vision and pattern recognition},
  pages={586--595},
  year={2018}
}

@article{chai2014root,
  title={Root mean square error (RMSE) or mean absolute error (MAE)?--Arguments against avoiding RMSE in the literature},
  author={Chai, Tianfeng and Draxler, Roland R},
  journal={Geoscientific model development},
  volume={7},
  number={3},
  pages={1247--1250},
  year={2014},
  publisher={Copernicus Publications G{\"o}ttingen, Germany}
}

@article{huynh2008scope,
  title={Scope of validity of PSNR in image/video quality assessment},
  author={Huynh-Thu, Quan and Ghanbari, Mohammed},
  journal={Electronics letters},
  volume={44},
  number={13},
  pages={800--801},
  year={2008},
  publisher={IET}
}

@article{wang2004image,
  title={Image quality assessment: from error visibility to structural similarity},
  author={Wang, Zhou and Bovik, Alan C and Sheikh, Hamid R and Simoncelli, Eero P},
  journal={IEEE transactions on image processing},
  volume={13},
  number={4},
  pages={600--612},
  year={2004},
  publisher={IEEE}
}

@inproceedings{wu2023human,
  title={Human preference score: Better aligning text-to-image models with human preference},
  author={Wu, Xiaoshi and Sun, Keqiang and Zhu, Feng and Zhao, Rui and Li, Hongsheng},
  booktitle={Proceedings of the IEEE/CVF International Conference on Computer Vision},
  pages={2096--2105},
  year={2023}
}

@article{xu2023imagereward,
  title={Imagereward: Learning and evaluating human preferences for text-to-image generation},
  author={Xu, Jiazheng and Liu, Xiao and Wu, Yuchen and Tong, Yuxuan and Li, Qinkai and Ding, Ming and Tang, Jie and Dong, Yuxiao},
  journal={Advances in Neural Information Processing Systems},
  volume={36},
  pages={15903--15935},
  year={2023}
}

@inproceedings{lin2024evaluating,
  title={Evaluating text-to-visual generation with image-to-text generation},
  author={Lin, Zhiqiu and Pathak, Deepak and Li, Baiqi and Li, Jiayao and Xia, Xide and Neubig, Graham and Zhang, Pengchuan and Ramanan, Deva},
  booktitle={European Conference on Computer Vision},
  pages={366--384},
  year={2024},
  organization={Springer}
}

@inproceedings{sun2024generalizing,
  title={Generalizing offline alignment theoretical paradigm with diverse divergence constraints},
  author={Sun, Haoyuan and Zheng, Yuxin and Zhao, Yifei and Chang, Yongzhe and Wang, Xueqian},
  booktitle={ICML 2024 Workshop on Models of Human Feedback for AI Alignment},
  year={2024}
}

@inproceedings{sun2025positive,
  title={Positive enhanced preference alignment for text-to-image models},
  author={Sun, Haoyuan and Xia, Bo and Zhao, Yifei and Chang, Yongzhe and Wang, Xueqian},
  booktitle={ICASSP 2025-2025 IEEE International Conference on Acoustics, Speech and Signal Processing (ICASSP)},
  year={2025},
  organization={IEEE}
}

@inproceedings{sun2025identical,
  title={Identical human preference alignment paradigm for text-to-image models},
  author={Sun, Haoyuan and Xia, Bo and Zhao, Yifei and Chang, Yongzhe and Wang, Xueqian},
  booktitle={ICASSP 2025-2025 IEEE International Conference on Acoustics, Speech and Signal Processing (ICASSP)},
  year={2025},
  organization={IEEE}
}

@inproceedings{bai2025entropy,
  title={Entropy-Aware Preference Alignment for Diffusion-Based Text-to-Image Generation},
  author={Bai, Hannan and Sun, Haoyuan and Du, Yuncheng},
  booktitle={Chinese Conference on Pattern Recognition and Computer Vision (PRCV)},
  pages={373--387},
  year={2025},
  organization={Springer}
}

@article{lv2025hidden,
  title={The hidden link between rlhf and contrastive learning},
  author={Lv, Xufei and Chen, Kehai and Sun, Haoyuan and Bai, Xuefeng and Zhang, Min and Liu, Houde},
  journal={arXiv preprint arXiv:2506.22578},
  year={2025}
}

@article{luo2026reinforcement,
  title={Reinforcement learning meets masked generative models: Mask-grpo for text-to-image generation},
  author={Luo, Yifu and Hu, Xinhao and Fan, Keyu and Sun, Haoyuan and Chen, Zeyu and Xia, Bo and Zhang, Tiantian and Chang, Yongzhe and Wang, Xueqian},
  journal={Advances in Neural Information Processing Systems},
  volume={38},
  pages={108460--108485},
  year={2026}
}

@inproceedings{fang2026viss,
  title={Viss-r1: Self-supervised reinforcement video reasoning},
  author={Fang, Bo and Song, Yuxin and Sun, Haoyuan and Zhang, Xinyao and Wu, Qiangqiang and Wu, Wenhao and Chan, Antoni B},
  booktitle={Proceedings of the IEEE/CVF Conference on Computer Vision and Pattern Recognition},
  pages={11190--11200},
  year={2026}
}

@article{luo2025sample,
  title={Sample By Step, Optimize By Chunk: Chunk-Level GRPO For Text-to-Image Generation},
  author={Luo, Yifu and Du, Penghui and Li, Bo and Du, Sinan and Zhang, Tiantian and Chang, Yongzhe and Wu, Kai and Gai, Kun and Wang, Xueqian},
  journal={arXiv preprint arXiv:2510.21583},
  year={2025}
}

@article{sun2026power,
  title={Power Reinforcement Post-Training of Text-to-Image Models with Super-Linear Advantage Shaping},
  author={Sun, Haoyuan and Wang, Jing and Song, Yuxin and Lu, Yu and Fang, Bo and Luo, Yifu and Yin, Jun and Zeng, Pengyu and Zhang, Miao and Zhang, Tiantian and others},
  journal={arXiv preprint arXiv:2605.10937},
  year={2026}
}

@article{sun2025reinforcement,
  title={Reinforcement fine-tuning powers reasoning capability of multimodal large language models},
  author={Sun, Haoyuan and Wu, Jiaqi and Xia, Bo and Luo, Yifu and Zhao, Yifei and Qin, Kai and Lv, Xufei and Zhang, Tiantian and Chang, Yongzhe and Wang, Xueqian},
  journal={arXiv preprint arXiv:2505.18536},
  year={2025}
}

@article{xia2025delay,
  title={A delay-robust method for enhanced real-time reinforcement learning},
  author={Xia, Bo and Sun, Haoyuan and Yuan, Bo and Li, Zhiheng and Liang, Bin and Wang, Xueqian},
  journal={Neural Networks},
  volume={181},
  pages={106769},
  year={2025},
  publisher={Elsevier}
}

@article{suncalibration,
  title={Calibration Enhanced Decision Maker: Towards Trustworthy Sequential Decision-Making with Large Sequence Models},
  author={Sun, Haoyuan and Xia, Bo and Luo, Yifu and Zhang, Tiantian and Wang, Xueqian},
  journal={Transactions on Machine Learning Research},
  year={2026}
}

@inproceedings{yin2025floorplan,
  title={Floorplan-llama: Aligning architects’ feedback and domain knowledge in architectural floor plan generation},
  author={Yin, Jun and Zeng, Pengyu and Sun, Haoyuan and Dai, Yuqin and Zheng, Han and Zhang, Miao and Zhang, Yachao and Lu, Shuai},
  booktitle={Proceedings of the 63rd Annual Meeting of the Association for Computational Linguistics (Volume 1: Long Papers)},
  pages={6640--6662},
  year={2025}
}

@inproceedings{zeng2025mred,
  title={MRED-14: A Benchmark for Low-Energy Residential Floor Plan Generation with 14 Flexible Inputs},
  author={Zeng, Pengyu and Yin, Jun and Sun, Haoyuan and Dai, Yuqin and Jiang, Maowei and Zhang, Miao and Lu, Shuai},
  booktitle={Proceedings of the 33rd ACM International Conference on Multimedia},
  pages={11298--11307},
  year={2025}
}

@inproceedings{song2026cologen,
  title={Cologen: Progressive learning of concept-localization duality for unified image generation},
  author={Song, Yuxin and Lu, Yu and Sun, Haoyuan and Yao, Huanjin and Liu, Fanglong and Sun, Yifan and Feng, Haocheng and Zhou, Hang and Wang, Jingdong},
  booktitle={Proceedings of the IEEE/CVF Conference on Computer Vision and Pattern Recognition},
  pages={14724--14734},
  year={2026}
}

@article{yin2026ai,
  title={AI-empowered prediction of office building energy use from single-view conceptual images for early-stage design},
  author={Yin, Jun and Zeng, Pengyu and Huang, Yujian and Sun, Haoyuan and Hao, Tianze and Lu, Shuai and others},
  journal={Applied Energy},
  volume={406},
  pages={127289},
  year={2026},
  publisher={Elsevier}
}

\end{document}